\newtheorem{theorem}{Theorem}
\def\eqref#1{(\ref{#1})}
\def\1{\bm{1}}
\DeclareMathAlphabet{\mathsfit}{\encodingdefault}{\sfdefault}{m}{sl}
\SetMathAlphabet{\mathsfit}{bold}{\encodingdefault}{\sfdefault}{bx}{n}
\newcommand{\R}{\mathbb{R}}
\title{Concept Gradients: Concept-based Interpretation Without Linear Assumption}
\date{}
\newif\ifuniqueAffiliation
\author{%
  Andrew Bai \\
  Department of Computer Science \\
  University of California, Los Angeles \\
  \texttt{andrewbai@cs.ucla.edu} \\
  \And
  Chih-Kuan Yeh \\
  Department of Computer Science \\
  Carnegie Mellon University \\
  \texttt{cjyeh@cs.cmu.edu} \\
  \AND
  Neil Y. C. Lin \\
  Department of Bioengineering\\
  University of California, Los Angeles\\
  \texttt{neillin@g.ucla.edu} \\
  \And
  Pradeep Ravikumar \\
  Department of Computer Science \\
  Carnegie Mellon University \\
  \texttt{pradeepr@cs.cmu.edu} \\
  \And
  Cho-Jui Hsieh \\
  Department of Computer Science\\
  University of California, Los Angeles\\
  \texttt{chohsieh@cs.ucla.edu} \\
}
\begin{document}
\maketitle

\begin{abstract}
Concept-based interpretations of black-box models are often more intuitive than feature-based counterparts for humans to understand. 
The most widely adopted approach for concept-based gradient interpretation is Concept Activation Vector (CAV). 
CAV relies on learning linear relations between some latent representations of a given model and concepts. 
The premise of meaningful concepts lying in a linear subspace of model layers is usually implicitly assumed but does not hold true in general. 
In this work we proposed Concept Gradients (CG),  which extends concept-based gradient interpretation methods to non-linear concept functions. 
We showed that for a general (potentially non-linear) concept, we can mathematically measure how a small change of concept affects the model's prediction, which is an extension of gradient-based interpretation to the concept space. 
We demonstrate empirically that CG outperforms CAV in evaluating concept importance on real world datasets and perform a case study on a medical dataset.
The code is available at \url{https://github.com/jybai/concept-gradients}.
\end{abstract}

\keywords{Concept-based interpretation \and Explainability \and XAI \and Non-linearity}

\section{Introduction}


Explaining the prediction mechanism of machine learning models is important, not only for debugging and gaining trust, but also for humans to learn and actively interact with them. 
Many feature attribution methods have been developed to attribute importance to input features for the prediction of a model~\citep{sundararajan2017axiomatic,zeiler2014visualizing}. 
However, input feature attribution may not be ideal in the case where the input features themselves are not intuitive for humans to understand.
It is then desirable to generate explanations with human-understandable concepts instead, motivating the need for {\bf concept-based explanation}. 
For instance, to understand a machine learning model that classifies bird images into fine-grained species, attributing importance to high-level concepts such as body color and wing shape explains the predictions better than input features of raw pixel values (see Figure~\ref{fig:visualize_CUB_ig-cg}).

\begin{figure}
    \centering
    \includegraphics[width=.8\linewidth]{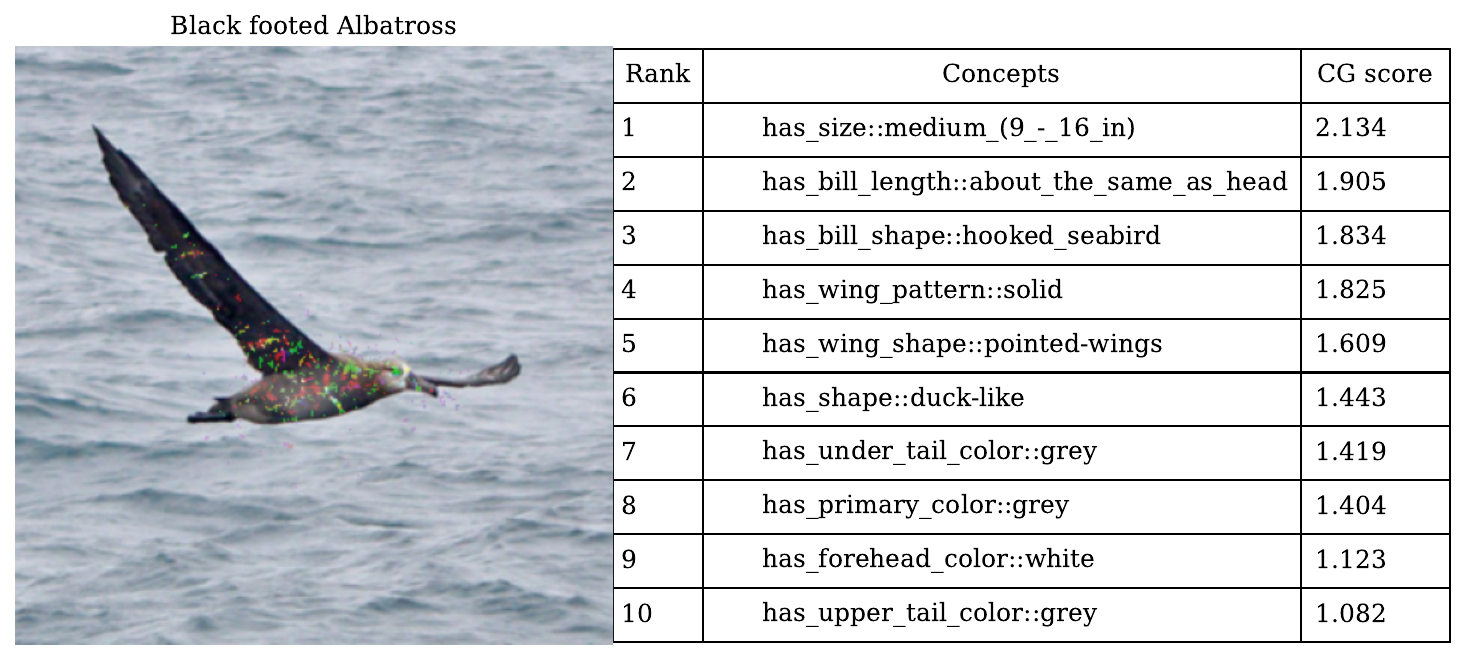}
    \caption{Comparison of feature-based interpretation heatmap (left: Integrated Gradients) and concept-based importance score (right: Concept Gradients) for the model prediction of ``Black footed Albatross''.
    Attribution to high-level concepts is more informative to humans than raw pixels.}
    \label{fig:visualize_CUB_ig-cg}
\end{figure}

The most popular approach for concept-based interpretation is Concept Activation Vector (CAV)~\cite{kim2018interpretability}.
CAV represents a concept with a vector in some layer of the target model and evaluates the sensitivity of the target model's gradient in the concept vector's direction. 
Many followup works are based on CAV and share the same fundamental assumption that concepts can be represented as a linear function in some layer of the target model~\citep{ghorbani2019automating,schrouff2021best}. 
This assumption generally does not hold, however, and it limits the application of concept-based attribution to relatively simple concepts. 
Another problem is the CAV concept importance score. 
The score is defined as the inner product of CAV and the input gradient of the target model.
Inner product captures correlation, not causation, but the concept importance scores are often perceived causally and understood as an explanation for the predictions it does.

In this paper, we rethink the problem of concept-based explanation and tackle the two weak points of CAV.
We relax the linear assumption by modeling concepts with more complex, non-linear functions (e.g. neural networks).
To solve the causation problem, we extend the idea of taking gradients from feature-based interpretation.
Gradient-based feature interpretation assigns importance to input features by estimating input gradients, i.e., taking the derivative of model output with respect to input features.
Input features corresponding to larger input gradients are considered more important.
A question naturally arises: is it possible to extend the notion of input gradients to ``concept'' gradients?
Can we take the derivative of model output with respect to post-hoc concepts when the model is not explicitly trained to take concepts as inputs?

We answer this question in affirmative and formulate Concept Gradients (CG).
CG measures how small changes of concept affect the model's prediction mathematically.
Given any target function and (potentially non-linear) concept function, CG first computes the input gradients of both functions.
CG then combines the two input gradients with the chain rule to imitate taking the derivative of the target with respect to concept through the shared input.
The idea is to capture how the target function changes locally according to the concept.
If there exists a unique function that maps the concept to the target function output, CG exactly recovers the gradient of that function.
If the mapping is not unique, CG captures the gradient of the mapping function with the minimum gradient norm, which avoids overestimating concept importance.
We discover that when the concept function is linear, CG recovers CAV (with a slightly different scaling factor), which explains why CAV works well in linearly separable cases.
We showed in real world datasets that the linear separability assumption of CAV does not always hold and CG consistently outperforms CAV.
The average best local recall@30 of CG is higher than the best of CAV by 7.9\%, while the global recall@30 is higher by 21.7\%.






\section{Preliminaries}

\paragraph{Problem definition}
In this paper we use $f:\R^d  \rightarrow \R^k$ to denote the machine learning model to be explained, $x\in \R^d$ to denote input, and $y\in\R^k$ to denote the label. 
For an input sample $\hat{x}\in \R^d$, concept-based explanation aims to explain the prediction $f(\hat{x})$ based on a set of $m$  concepts $\{c_1, \dots, c_m\}$. In particular, the goal is to reveal how important is each concept to the prediction. 
Concepts can be given in different ways, but in the most general forms, we can consider concepts as functions mapping from the input space to the concept space, denoted as $g: \R^d \rightarrow \R$. 
The function can be explicit or implicit. 
For example, morphological concepts such as cell perimeter and circularity can be given as explicit functions defined for explaining lung nodule prediction models.
On the other hand, many concept-based explanations in the literature consider concepts given as a set of examples, which are finite observations from the underlying concept function. 
We further assume $f$ and $g$ are differentiable which follows the standard assumption of gradient-based explanation methods. 

We define a local concept relevance score $R(\hat{x}; f, g)$ to represent how concept $\hat{c}$ affects target class prediction $\hat{y} = f(\hat{x})$ on this particular sample $\hat{x}$.
Local relevance is useful when analyzing individual data samples (e.g. explain what factors lead to a bank loan application being denied, which may be different from application to application).
Further, by aggregating over all the input instances of class $y$, $Z_y = \{(x_1, y), \ldots, (x_n, y)\}$, we can define a global concept relevance score $R(Z_y; f, g)$ to represent overall how a concept affects the target class prediction $y$.
Global relevance can be utilized to grasp an overview of what the model considers importance (e.g. good credit score increases bank loan approval).
The goal is to calculate concept relevance scores such that the scores reflect the true underlying concept importance, possibly aligned with human intuition.

\paragraph{Recap of Concept Activation Vector (CAV)}
Concept activation vector is a concept-based interpretation method proposed by  \citet{kim2018interpretability}.
The idea is to represent a concept with a vector and evaluate the alignment between the input gradients of the target model and the vector.
In order for the concept to be represented well by a vector, the concept labels must be linearly separable in the vector space.
The authors implicitly assumed that there exists a layer in the target model where concept labels can be linearly separated.

Let $\bm{v}_c$ denote the concept activation vector associated with concept $c$.
The authors define the conceptual sensitivity score to measure local concept relevance
\begin{equation}
    R_\text{CAV}(x; f, \bm{v}_c) := \nabla f(x) \cdot (\bm{v}_c/\|\bm{v}_c\|). \label{eq:cav_s_def}
\end{equation}
If we view CAV as a linear concept function, $R_\text{CAV}$ is the (normalized) inner product of the gradients of the target and concept function.
The main caveat with the CAV conceptual sensitivity scores is that the underlying concept function is not guaranteed to lie in the linear subspace of some neural activation space.
Attempting to fit the concept function with a linear model likely leads to poor results for non-trivial concepts, leading to inaccurate conceptual sensitivity scores.

\paragraph{GC: extending CAV to non-linear concepts}
Let us consider modeling concepts with general, non-linear function $g$ instead to relax the assumption on linear separability.
We define the non-linear generalization of inner product concept-based interpretation, Gradient Correlation (GC), as follows
\begin{equation}
    R_\text{GC}(x; f, \bm{v}_c) := \nabla f(x) \cdot (\nabla g(x)/\|\nabla g(x)\|). \label{eq:gradcorr_def}
\end{equation}
GC takes the gradients from the target and concept function and calculates the linear correlation of gradients in each dimension in the shared input feature space.
Intuitively, if gradients magnitudes coincide in similar dimensions (large inner product), then the concept is relevant to the target function.
CAV is a special case of GC where $g$ is limited to a linear function.

The main caveat of GC is the inner product of gradients between the target and concept only yields correlation.
In some applications correlation is already sufficient for interpretation, which explains the success of CAV.
However, one may be tempted  to ask: if a mapping from the concept to the target function output exists, can we retrieve the gradient of the target function output with respect to concepts to evaluate ``causal'' interpretation?
This is an extension of typical input gradient approaches for feature-based interpretations~\citep{ancona2017towards,shrikumar2017learning,sundararajan2017axiomatic}.
Is it possible to calculate this concept gradient given only access to the gradients $\nabla f(x)$ and $\nabla g(x)$?
The answer lies in the shared input feature space of $f$ and $g$ where gradients can be propagated with chain rule. 

\section{Proposed method}

\subsection{Definition of Concept Gradients (CG)}
We define the {\bf Concept Gradients} (CG) to measure how small perturbations on a concept $g$ affect the target function output $f$ through gradients: 
\begin{equation}
  R_\text{CG}(x;f, g) := \nabla g(x)^\dagger \nabla f(x) = \frac{\nabla g(x)^T}{\|\nabla g(x)\|^2} \cdot \nabla f(x)
    \label{eq:cg}
\end{equation}
where $\nabla g(x)^\dagger$ is the Moore–Penrose inverse (pseudo-inverse) of $\nabla g(x)$.
Pseudo-inverse is a generalized version of matrix inversion---when the inverse does not exist, it forms the (unique) inverse mapping from the column space to the row space of the original matrix, while leaving all the other spaces untouched. 
Using pseudo-inverse prevents mis-attribution of importance from other spaces irrelevant to $\nabla g(x)$. 
Conveniently, the pseudo-inverse of $\nabla g(x)$ is just its normalized transpose since $\nabla g(x)$ is a $d$-dimensional vector.
Suppose a function $h$ that maps the concept $c$ to target function uniquely exists,
\[
    f(x) = h(c) = h(g(x))
\]
then CG exactly recovers the derivative of $h$ which is the importance attribution of the target function with respect to concept
\[
    h'(c) = \frac{\text{d}h(c)}{\text{d}c} = \frac{\text{d}h(g(x))}{\text{d}g(x)} = R_\text{CG}(x;f, g).
\]

Let us illustrate the intuition CG with a simplified case where $x, y$ and $c$ are all scalars ($k=d=1$).
Our goal is to represent $y$ as a function of $c$ to obtain the derivative $\frac{\text{d}y}{\text{d}c}$.
We can expand the expression with chain rule:
\[
    \frac{\text{d}y}{\text{d}c} = \frac{\text{d}y}{\text{d}x} \cdot \frac{\text{d}x}{\text{d}c}
    = f'(x) \cdot \frac{\text{d}x}{\text{d}c}
\]
The derivative $\frac{\text{d}x}{\text{d}c}$ is the remaining term to resolve.
Notice that since $x$ and $c$ are scalars, $\frac{\text{d}x}{\text{d}c}$ is simply the inverse of $\frac{\text{d}c}{\text{d}x} = g'(x)$ assuming $\frac{\text{d}c}{\text{d}x} \neq 0$.
Thus, concept gradients exactly recovers $\frac{\text{d}y}{\text{d}c}$: 
\begin{equation}
    \frac{\text{d}y}{\text{d}c}
    = f'(x) \cdot (\frac{\text{d}c}{\text{d}x})^{-1}
    = f'(x) \cdot g'(x)^{-1}
    = f'(x) \cdot g'(x)^\dagger
    = R_\text{CG}(x). \label{eq:cg_h_unique}
\end{equation}
The pseudo-inverse in \eqref{eq:cg} extends this derivation to the general case when $x$, $y$, and $c$ have arbitrary dimensions.
Details can be found in Appendix~\ref{sec:derive}. 

\subsection{Implementation of CG}
\label{sec:cg_impl}
Implementing CG is rather simple in practice.
Given a target model $f$ to interpret, CG can be calculated if the concept model $g$ is also provided by the user.
When concepts are given in the form of positive and negative samples by users, instead of an explicit concept function, we can learn the concept function from the given samples.
Generally, any concept model $g$ sharing the same input representation as $f$ would suffice.
Empirically, we discovered that the more similar $f$ and $g$ is (in terms of both model architecture and weight), the better the attribution performance. 
Our hypothesis is since there might be redundant information in the input representation to perform the target prediction, many solutions exist (i.e. many $f$ can perform the prediction equally well).
The more similar $g$ utilizes the input information as $f$, the more aligned the propagation of gradient through the share input representation is.

We propose a simple and straightforward strategy for training concept models $g$ similar to $f$.
We train $g$ to predict concepts by finetuning from the pretrained model $f$.
The weight initialization from $f$ allows the final converged solution of $g$ to be closer to $f$.
We can further constrain $g$ by freezing certain layers during finetuning.
The similarity between $f$ and $g$ leads to similar utilization of input representation, which benefits importance attribution via gradients.
More details regarding the importance of the similarity between $f$ and $g$ can be found in Section~\ref{sec:layer_selection}.

CG can be used to evaluate per-sample (local) and per-class (global) concept relevance score.
Following TCAV~\citep{kim2018interpretability}, we evaluated global CG by relevance score calculating the proportion of positive local CG relevance over a set of same-class samples $\mathcal{Z}_y = \{(x_1, y), \ldots (x_n, y)\}$.
\[
    R_\text{CG}(\mathcal{Z}_y; f, g) := \frac{|\{(x, y) \in \mathcal{Z}_y : R_\text{CG}(x; f, g) > 0 \}|}{|\mathcal{Z}_y|}
\]
Global CG relevance score can also be aggregated differently, suitable to the specific use case.

\subsection{Selecting layer for attribution}
\label{sec:layer_selection}
Similar to CAV, CG can be computed at any layer of a neural network by setting $x$ as the hidden state of neurons in a particular layer. 
The representation of input $x$ is relevant to attribution, as the information contained differs between representations.
Both methods face the challenge of properly selecting a layer to perform calculation.

For a feed-forward neural network model, information irrelevant to the target task is removed when propagating through the layers (i.e. feature extraction).
Let us denote the representation of $x$ in the $l^\text{th}$ layer of the target model $f$ as $x_{f_l}$.
We hypothesized that the optimal layer $l^*$ for performing CG is where the representation $x_{f_{l^*}}$ contains {\bf minimally necessary and sufficient information} to predict concept labels. 
Intuitively, the representation of $x_{f_l}$ needs to contain sufficient information to correctly predict concepts to ensure the concept gradients $\nabla g(x)$ are accurate.
On the other hand, if there is redundant information in $x_{f_l}$ that can be utilized to predict the target $y$ and concept $c$, then $g$ may not rely on the same information as $f$, which causes misalignment in gradients $\nabla g(x)$ and $\nabla f(x)$ leading to underestimation of concept importance.

The algorithm for selecting the optimal layer to perform CG is simple.
The model $g$ is initialized with weights from $f$ and all the weights are initially frozen.
Starting from the last layer, we unfreeze the layer weights and finetune $g$ to predict concepts.
We train until the model converges and evaluate the concept prediction accuracy on a holdout validation set.
The next step is to unfreeze the previous layer and repeat the whole process until the concept prediction accuracy saturates and no longer improves as more layers are unfrozen.
We have then found the optimal layer for CG as well as the concept model $g$.

\subsection{Connections between CAV, GC, and CG}
In the special case when $g(x)=v_C \cdot x$  is a linear function (the assumption of CAV), we have $R_\text{CG}(x)= v_C^T \nabla f(x) / \|v_C\|^2$.
This is almost identical to conceptual sensitivity score in Eq~\ref{eq:cav_s_def} except a slightly different normalization term where CAV normalizes the inner product by $1/\|v_C\|$. 
Furthermore, the sign of CG and CAV will be identical which explains why CAV is capable of retrieving important concepts under the linearly separable case. 

Here we use a simple example to demonstrate that the normalization term could be important in some special cases. Consider $f$ as the following network with two-dimensional input $[x_0, x_1]$: 
\begin{equation}
y = 0.1 z_0 + z_1, \begin{bmatrix}
z_0 \\ 
    z_1
    \end{bmatrix} = \begin{bmatrix}
    100 & 0 \\
    0 & 1
    \end{bmatrix}
    \begin{bmatrix}
    h_0 \\
    h_1
    \end{bmatrix}
    , \begin{bmatrix}
h_0 \\
    h_1
    \end{bmatrix} = \begin{bmatrix}
    0.01 & 0 \\
    0 & 1
    \end{bmatrix}
    \begin{bmatrix}
    x_0 \\
    x_1
    \end{bmatrix}, 
\end{equation}
and $c_0=x_0$, $c_1=x_1$. Then we know since $y=0.1z_0+z_1 = 0.1x_0+x_1$, the contribution of $c_1$ should be 10 times larger than $c_0$. 
In fact, $\frac{dy}{dc_0}=0.1, \frac{dy}{dc_1}=1$ and it's easy to verify that CG will correctly obtain the gradient no matter which layer is chosen for computing \eqref{eq:cg}. 
However, the results will be wrong when a different normalization term is used when computing concept explanation on the hidden layer $h$. Since $c_0=100h_0, c_1 = h_1, y = 10h_0+h_1$, we have
\begin{align}
    \text{For concept $c_0$: } v &= \frac{dc_0}{dh}=[100, 0]^T, u=\frac{dy}{dh} = [10, 1], v^Tu/\|v\|= {\color{red}10}, v^Tu/\|v\|^2= {\color{blue}0.1} \nonumber\\
    \text{For concept $c_1$: } v &= \frac{dc_1}{dh}=[0, 1]^T, \frac{dy}{dh} = [10, 1], v^Tu/\|v\| = {\color{red}1}, v^Tu/\|v\|^2 = {\color{blue}1}.
\end{align}
Therefore, the normalization term used in CAV (in red color) will lead to a conclusion that $c_0>c_1$, while CG (in blue color) will correctly get the actual gradient and conclude $c_1 > c_0$. 
This is mainly because CG is formally derived from the actual gradient, as we will discuss below. 
In contrast, CAV is based on the intuition of correlation in the form of gradient inner product and not the exact chain-rule gradient, so its attribution is subject to per-dimension scaling.

Although the normalization term can be important in some special cases, in practice we do not find the attribution results to be much different with different normalization terms in empirical studies.   
We compared different methods of calculating CG (including different normalization schemes) empirically in Section~\ref{sec:exp_cub}. 
We hypothesis such extreme per-dimensional scaling in input features are less common in well-trained neural networks.
Thus, in practice if the concept can be accurately modeled by a linear function, CAV might be a good approximation for concept gradients.
When linear separability assumption does not hold, GC might be a good approximation for concept gradients.
But in general only CG recovers the concept gradients when feature scaling conditions are not ideal. 

\section{Experimental Results}

\subsection{Quantitative analysis}
\label{sec:exp_cub}
In this experiment, our goal is to quantitatively benchmark how well CG is capable of correctly retrieving relevant concepts in a setting where the ground truth concept importance is available.
In this case, the ground truth concept importance consists of human annotations of how relevant a concept is to the model prediction.
We can evaluate the quality of concept importance attribution by treating the task as a retrieval problem.
Specifically given a data sample and its target class prediction, the concept importance given by CG is used to retrieve the most relevant concepts to the prediction.
A good importance attribution method should assign highest concept importance to concepts relevant to the class.

We benchmarked local (per-sample) and global (per-class) concept importance attribution separately.
Given an input sample $(x_i, y_i)$, its ground truth relevant concept set $\hat{C}_i$ and the top $k$ attributed local important concept set $C_i$, the local recall@k is defined as 
$\frac{|\hat{C_i} \cap C_i|}{k}$.

Given a set of data samples from the same class $\mathcal{Z}_y = \{(x_1, y), \ldots, (x_n, y)\}$ and their ground truth relevant concept sets $\{\hat{C}_1, \ldots, \hat{C}_n\}$, we can obtain the class-wise ground truth concept set $\hat{C}_y$ by majority voting.
We can also obtain the top $k$ attributed global important concept set $C_y$.
The global recall@k is then defined as $\frac{|\hat{C}_y \cap C_y|}{k}$.

The local attribution performance is measured by the average local recall@k (over samples) on a holdout testing set.
The global attribution performance is measured by the average global recall@k (over classes) on a holdout testing set.

\textbf{Dataset.}
We experimented on CUB-200-2011~\citep{WahCUB_200_2011}, a dataset for fine-grained bird image classification.
It consists of 11k bird images, 200 bird classes, and 312 binary attributes.
These attributes are descriptors of bird parts (e.g. bill shape, breast pattern, eye color) that can be used for classification.
We followed experimental setting and preprocessing in \citep{koh2020concept} where attributes with few samples are filtered out leaving 112 attributes as concepts for interpretation.

\textbf{Evaluation.}
For CG, we finetuned the target model $f$ for fine-grained bird classification on the concept labels to obtain the concept model $g$.
We also evaluated CAV on the corresponding model layers for comparison.
All the details regarding model training for reproducibility can be found in Appendix~\ref{sec:appendix_cub}.
The local and global recalls for all models and layers are plotted in Fig~\ref{fig:cub_recalls}.
The layers in the x-axis are shallow to deep from left to right.

\begin{figure}
    \centering
    \includegraphics[width=\linewidth]{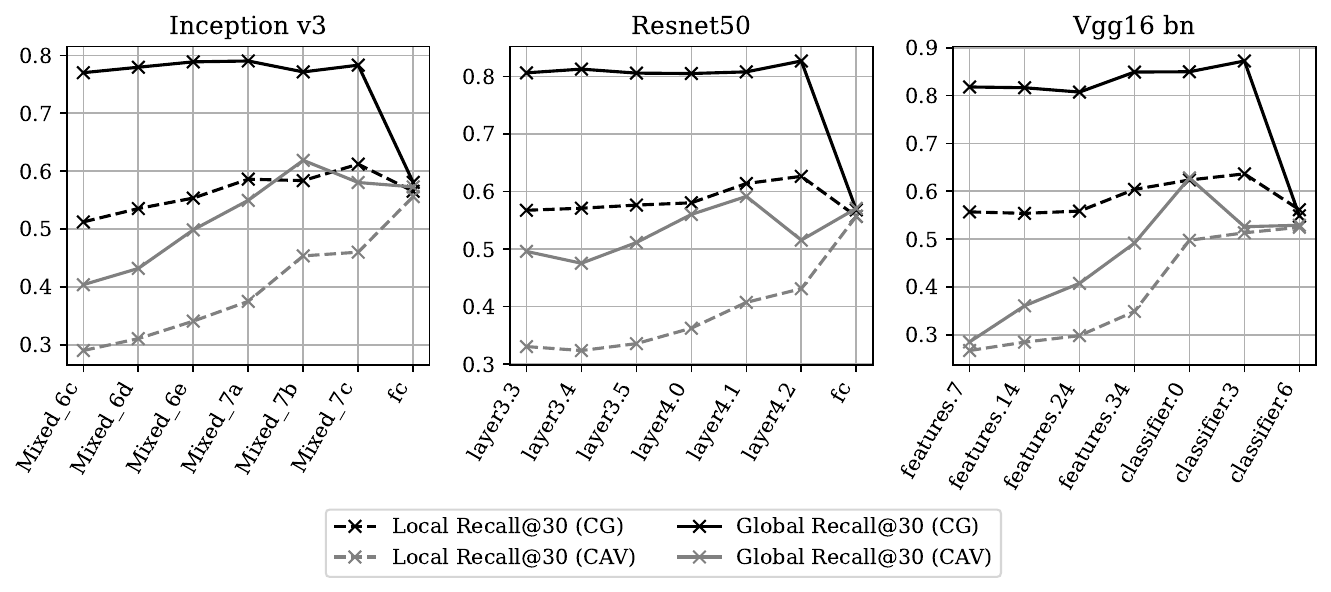}
\caption{CUB concept recalls for different input representations in various layers and architectures (left to right, deep to shallow layers).
CG consistently performs better than CAV locally and globally.}
\label{fig:cub_recalls}
\end{figure}

Tables~\ref{tab:cub_local} and~\ref{tab:cub_global} compares the best result of CAV and CG of the three different model architectures.
CG consistently outperforms CAV in concept importance attribution (recall) since the non-linearity in CG captures concepts better (higher concept accuracy).

\begin{table}
    \centering
    \caption{Local concept importance attribution comparison on CUB}
    \label{tab:cub_local}
    \centering
    \begin{tabular}{|c|c|c|c|c|c|}
        \hline
        \multirow{2}{*}{Model} & \multirow{2}{*}{Method} & Concept & \multicolumn{3}{c|}{Local (per-sample)} \\
        \cline{4-6}
               &    &Accuracy& R@30   &  R@40  &  R@50 \\
        \hline
        \multirow{2}{*}{Inception v3}   & CAV& 0.709 & 0.556 & 0.664 & 0.745 \\
        \cline{2-6}                     & local-CG & 0.791 & 0.612 & 0.718 & 0.790 \\
        \hline
        \multirow{2}{*}{Resnet50}       & CAV& 0.727 & 0.557 & 0.663 & 0.740 \\
        \cline{2-6}                     & local-CG & 0.793 & 0.627 & 0.726 & 0.794 \\
        \hline
        \multirow{2}{*}{Vgg16 bn}       & CAV& 0.703 & 0.525 & 0.628 & 0.708 \\
        \cline{2-6}                     & local-CG & 0.793 & 0.637 & 0.733 & 0.800 \\
        \hline
    \end{tabular}
\end{table}

\begin{table}
    \centering
    \caption{Global concept importance attribution comparison on CUB}
    \label{tab:cub_global}
    \centering
    \begin{tabular}{|c|c|c|c|c|c|}
        \hline
        \multirow{2}{*}{Model} & \multirow{2}{*}{Method} & Concept & \multicolumn{3}{c|}{Global (per-class)} \\
        \cline{4-6}
               &    &Accuracy& R@30   &  R@40  &  R@50 \\
        \hline
        \multirow{2}{*}{Inception v3}   & TCAV & 0.709 & 0.619 & 0.742 & 0.822 \\
        \cline{2-6}                    & global-CG & 0.791 & 0.790 & 0.894 & 0.944 \\
        \hline
        \multirow{2}{*}{Resnet50}       & TCAV & 0.727 & 0.592 & 0.692 & 0.775 \\
        \cline{2-6}                    & global-CG & 0.793 & 0.827 & 0.915 & 0.951 \\
        \hline
        \multirow{2}{*}{Vgg16 bn}       & TCAV & 0.703 & 0.628  & 0.747 & 0.832 \\
        \cline{2-6}                    & global-CG & 0.793 & 0.872 & 0.935 & 0.961 \\
        \hline
    \end{tabular}
\end{table}

\textbf{Ablation study on model layers.}
We performed an ablation study on performing attribution with different input representations, as given by different model layers.
We observe that for every input representation in every model, CG consistently outperforms CAV in concept importance attribution, both locally and globally.
For CG, the performance trend peaks in the penultimate layer representation.
While finetuning more layers may lead to better concept prediction, it does not necessarily translate into better attribution.
There is a trade-off between the concept model $g$ capturing the concept well (finetuning more layers better) and utilizing the input more similarly to the target model $f$ (finetuning less layers better).
In this case, the input representation in the penultimate layer retains sufficient information to capture the concept labels.
For CAV, the performance is generally better in deeper layers since concepts are more linearly separable in the latter layers.
Unlike the non-linearity of CG, CAV is unable to exploit the more concept information contained in the representation of earlier layers.

\textbf{Ablation study on normalization schemes.} We compared different methods of calculating the concept gradients with various gradient normalization schemes.
We ran all experiments on Inception v3 with finetuned layers \texttt{Mixed\_7b+}.
The results are presented in Table~\ref{tab:cg_normalization}.
Recall that CG is exactly inner product normalized with squared concept norm, IP represents pure inner product between $\nabla f(x)$ and $\nabla g(x)$ without normalization, GC with normalized $\nabla g(x)$, and cosine with both normalized $\nabla f(x)$ and $\nabla g(x)$. 
We observe that all methods performs equally well.
This supports the argument that the gradient norms in trained neural network are well-behaved and it is less common to encounter cases where normalization influences the attribution results significantly.

\begin{table}[ht]
    \centering
    \caption{Comparison of CG with different normalization schemes}
    \label{tab:cg_normalization}
    \begin{tabular}{|c|c|c|c|c|c|c|c|}
        \hline
        \multirow{2}{*}{Scheme} &
        \multirow{2}{*}{Formula} &
        \multicolumn{3}{|c}{Local} & \multicolumn{3}{|c|}{Global} \\
        \cline{3-8}
        & & R@30 & R@40 & R@50 & R@30 & R@40 & R@50 \\
        \hline
        \cline{2-8} CG & $\frac{\nabla g(x)^T}{{\| \nabla g(x)\|}^2} \cdot \nabla f(x)$ & 
        0.612 & 0.718 & 0.790 & 0.783 & 0.907 & 0.949 \\
        \hline
        IP & $\nabla g(x)^T \cdot \nabla f(x)$ & 
        0.601 & 0.718 & 0.801 & 0.783 & 0.907 & 0.949 \\
        \hline
        GC & $\frac{\nabla g(x)^T}{{\| \nabla g(x)\|}} \cdot \nabla f(x)$ & 
        0.610 & 0.720 & 0.799 & 0.783 & 0.907 & 0.949 \\
        \hline
        Cosine & $\frac{\nabla g(x)^T}{{\| \nabla g(x)\|}} \cdot \frac{\nabla f(x)}{{\| \nabla f(x)\|}}$ & 
        0.610 & 0.720 & 0.799 & 0.783 & 0.907 & 0.949 \\
        \hline
    \end{tabular}
\end{table}


\subsection{Qualitative analysis}
\label{sec:exp_qualitative}
The purpose of this experiment is to provide intuition and serve as a sanity check by visualizing instances and how CG works.

\textbf{Dataset.}
We conducted the experiment on the Animals with Attributes 2 (AwA2) dataset~\citep{xian2018zero}, an image classification dataset with 37k animal images, 50 animal classes, and 85 binary attributes for each class.
These concepts cover a wide range of semantics, from low-level colors and textures, to high-level abstract descriptions (e.g. ``smart'', ``domestic'').
We further filtered out 60 concepts that are visible in the input images to perform interpretation.

\textbf{Evaluation.}
The evaluation is performed on the validation set.
Fig~\ref{fig:awa2_topk_act_per_concept} visualizes the instances with the highest CG importance attribution for 6 selected concepts, filtering out samples from the same class (top 1 instance in the top 3 classes).
The concepts are selected to represent different levels of semantics.
The top row contains colors (low-level), the middle row contains textures (medium-level), and the bottom row contains body components (high-level).
Observe that CG is capable of handling different levels of semantics simultaneously well, owing to the expressiveness of non-linear concept model $g$. Additionally, we presented {\bf randomly sampled} instances from the validation set and listed top-10 most important concepts as attributed by CG (see the appendix). 
We intentionally avoided curating the visualization samples to demonstrate the true importance attribution performance of CG.
The most important concepts for each instance passed the sanity check.
There are no contradictory concept-class pairings and importance is attributed to concepts existent in the images.

\begin{figure}
    \centering
    \includegraphics[width=.7\linewidth]{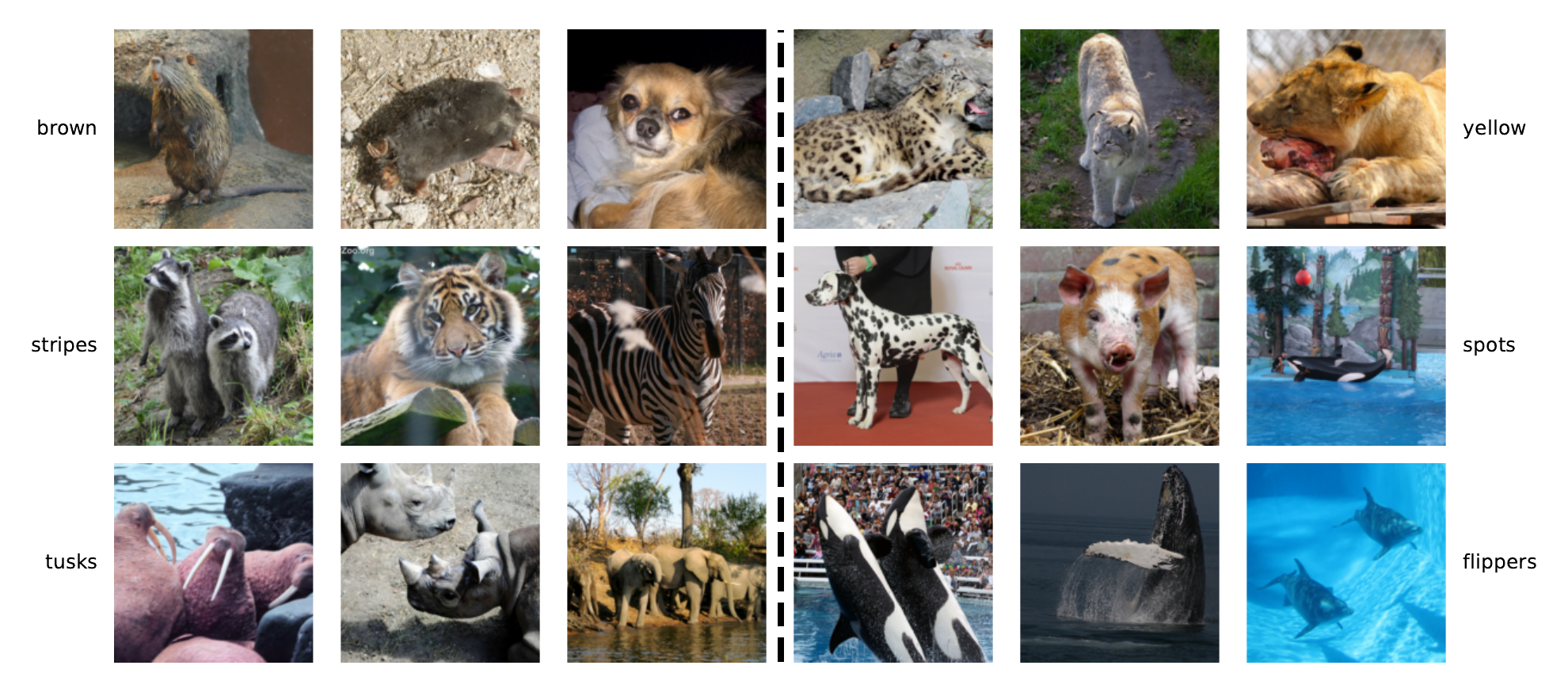}
    \caption{Visualization of instances with highest CG attributed importance (AwA2 validation set) for each concept (top 1 instance in the top 3 classes per concept).
    CG is capable of handling low level (colors), middle level (textures), and high level (body components)  concepts simultaneously.}
    \label{fig:awa2_topk_act_per_concept}
\end{figure}

\subsection{Case study on mortality risk of myocardial infarction complications}
The purpose of the case study is to demonstrate the effectiveness of CG in critical domain applications beyond classification tasks on natural-image data.

\textbf{Dataset.}
We experimented on the Myocardial infarction complications database.
The database consists of 1,700 entries of patient data with 112 fields of input tabular features, 11 complication fields for prediction, and 1 field of lethal outcome also for prediction.
The input features are measurements taken when the patients were admitted to the hospital due to myocardial infarction as well as past medical records.
The target models predict lethal outcome given the 112 input fields.
The concept models predict the complications given the same 112 input fields.

\textbf{Evaluation.}
Our goal is to interpret the lethal outcome with the complications and compare our interpretation with existing literature regarding how each complication affects the risk of death.
We expect good interpretations to assign high relevance to complications that pose high mortality risk.
Table~\ref{tab:myocardial} shows the global CG scores (aggregated by averaging local CG) as well as the excerpted description of the complication mortality risk in the existing medical literature.
The severity of descriptions in the medical literature is largely aligned with the CG scores.
The highest risk complications are attributed the most importance (e.g. relapse of MI) while the lower risk complications are attributed the least importance (e.g. post-infarction angina).
This supports CG as an effective method for real-life practical use for interpreting models in critical domains.
We also provided TCAV scores (aggregated by averaging local CAV) for comparison, which is largely aligned with CG with few exceptions (e.g. chronic heart failure) where the CAV interpretation deviate from literature.
The full table can by found in Appendix~\ref{sec:appendix_myocardial}.

\begin{table}[ht]
  \centering
  \caption{Mortality risk attribution with respect to a subset of myocardial infarction complications and comparison with existing medical literature  
  }
  \label{tab:myocardial}
  \centering
  \begin{tabular}{|p{0.16\linewidth}|c|c|p{0.625\linewidth}|}
    \hline
    Complication                        & CG   & TCAV  & 
    Excerpted mortality risk description from medical literature\\
    \hline
    Relapse of MI                       & 3.47 & 2.55 &
    Recurrent infarction causes the most deaths following myocardial infarction with left ventricular dysfunction.~\citep{pmid15989909}\\
    \hline
    Chronic \newline heart failure      & 3.27 &-1.26 &
    The mortality rate in a group of patients with class III and IV heart failure is about 40\% per year, and half of the deaths are sudden.~\citep{pmid3552300}\\
    \hline
    Myocardial rupture                  & 1.62 & 6.52 &
    Myocardial rupture is a relatively rare and usually fatal complication of myocardial infarction (MI).~\citep{pmid20598985}\\
    \hline
    Ventricular \newline fibrillation   & 0.91 & 1.90 &
    Patients developing VF in the setting of acute MI are at higher risk of in-hospital mortality.~\citep{pmid24258072}\\
    \hline
    Dressler \newline syndrome          & 0.32 &-2.85 &
    The prognosis for patients with DS is typically considered to be quite good.~\citep{leib2017dressler}\\
    \hline
    Post-infarction angina              &-1.40 &-2.85 &
    After adjustment, angina was only weakly associated with cardiovascular death, myocardial infarction, or stroke.~\citep{pmid27680665}\\
    \hline
  \end{tabular}
\end{table}

\section{Related work}
Our work belongs to post-hoc concept-based explanations. 
While training with self-interpretable models \citep{bouchacourt2019educe, chen2019looks,lee2019functional, wang2015falling} is recommended if the use case allows, often times concepts are not known beforehand which makes the practicality of post-hoc explanations more widely adoptable.
Other classes of post-hoc explanations include featured-based explanations \citep{zintgraf2017visualizing, petsiuk2018rise, dabkowski2017real, shrikumar2017learning, sundararajan2017axiomatic}, counterfactual explanations\citep{dhurandhar2018explanations,Hendricks2018GroundingVE,vanderwaa2018,goyal2019counterfactual,Joshi2019Towards, Poyiadzi2020FACE, hsieh2021evaluations}, and sample-based explanations \citep{bien2011prototype, kim2016examples, koh2017understanding, yeh2018representer, pruthi2020estimating, khanna2018interpreting}. 
Our work considers the gradient from prediction to concepts, which is in spirit connected to feature explanations which considers the gradient from prediction to feature inputs \citep{zeiler2014visualizing, ancona2017towards}.

Concept-based explanations aims to provide human-centered explanations which answer the question ``does this human understandable concept relates to the model prediction?''. 
Some follows-up for concept-based explanations include when are concept sufficient to explain a model~\citep{yeh2019On}, computing interventions on concepts for post-hoc models~\citep{goyal2019explaining} and self-interpretable models~\citep{koh2020concept}, combining concept with other feature attributions~\citep{schrouff2021best}, unsupervised discovery of concepts~\citep{ghorbani2019automating, yeh2020completeness, ghandeharioun2021dissect}, and debiasing concepts~\citep{bahadori2020debiasing}. 
The most similar work to ours is the work of \cite{chen2020concept}, which is motivated by the issue of CAV that concept does not necessarily lie in the linear subspace of some activation layer. 
They address this problem by training a self-interpretable model and limits the concepts to be whitened. 
On the contrary, our work address the non-linear concept problem of CAV in the post-hoc setting by learning a non-linear concept component and connects to the activation space via chain rule. 
Another work that considers nonlinear modeling of concepts is TCAR~\citep{crabbe2022concept}.
They modeled concepts with non-linear kernel functions and calculate relevance score via the concept function output magnitude.
However, merely considering the function output ignores interactions between the target and concept function and potentially leads to explaining spurious correlations.

\section{Conclusion}
We revisited the fundamental assumptions of CAV, one of the most popular concept-based, gradient interpretation methods.
We tackled the problem from a mathematical standpoint and proposed CG to directly evaluate the gradient of a given target model with respect to concepts.
Our insight explains the success of CAV, which is a linear special case of CG.
Empirical experiments demonstrated that CG outperforms CAV on real datasets and is useful in interpreting models for critical domain applications.
Currently CG depends on the representation of input.
Devising an input representation invariant method is an interesting future direction.
CG also requires user-specified concepts to function.
Integrating automatic novel concepts discovery mechanisms into the method may be useful in applications with insufficient domain knowledge.

\section*{Acknowledgement}
This work is supported in part by NSF under IIS-2008173, IIS-2048280, an Okawa research grant, NIH NIGMS MIRA (1R35GM146735), 
UCLA California NanoSystems Institute and the Noble Family Innovation Fund.
I would also like to thank Karen who provided immense emotional support and offered insightful suggestions.

\bibliographystyle{unsrtnat}
\bibliography{references}  

\appendix
\section{Limitations}
\label{sec:limitations}
CG is a gradient-based interpretation methods, which is can only be applied to differentiable white-box models.
Gradient-based methods convey how small changes in the input affect the output via gradients.
Larger changes in the input requires intervention-based causal analysis to predict how the output is affected.
As a modification upon CAV, CG also requires users to specify concepts with sufficient representative data samples or close-form concept functions.
Sufficient amount of data is more important for CG to prevent the nonlinear function to overfit.
Automatic discovery of novel concepts requires introducing other tools.
Finally, CG requires fitting non-linear concept models if the concept is provided in the form of representative data samples.
This might be computationally intensive for complex non-linear models (e.g. neural networks).
The quality of interpretation highly depends on how accurately the concept is captured by the concept model.

\section{General CG for multiple concepts}
\label{sec:derive}
Here we derive CG in the general case when the concept function maps $x \in \mathbb{R}^d$ to $m$ concepts $g: \mathbb{R}^d \rightarrow \mathbb{R}^m$. 
Recall the definition of CG:
\[
R_\text{CG}(x; f, g) = \nabla g(x)^\dagger \nabla f(x)
\]
Note that when computing the gradient of a multivariate function such as $f$, we follow the convention that 
 $\nabla f(x)\in \R^{d\times k}$ where the $(\nabla f(x))_{ij}=\frac{\partial f_j(x)}{\partial x_i}$.
And $\nabla g(x)^\dagger\in\R^{m\times d}$ is the pseudo-inverse of $\nabla g(x)$. 
$R_\text{CG}(x)$ will thus be an $m\times k$ matrix and its $(i,j)$ element measures the contribution of concept $i$ to label $j$.

Let us first consider the scenario where $\nabla g(x)$ is invertible.
In this case, there exists an unique function $g^{-1}(c)$ mapping $c$ to $x$ locally around $\hat{c}$.
By chain rule $\text{CG}(\hat{x})$ is equivalent to the derivative of $y$ with respect to $c$: 
\begin{equation*}
    \frac{\partial y}{\partial c}\Big|_{c=\hat{c}} = \frac{\partial f(g^{-1}(c))}{\partial c}\Big|_{c=\hat{c}} = \nabla g^{-1}(\hat{c}) \nabla f(g^{-1}(\hat{c})) = (\nabla g(\hat{c}))^{-1} \nabla f(\hat{x}) = \text{CG}(\hat{x}). 
\end{equation*}
 
Now let us consider the scenario when $\nabla g(x)$ is not invertible.
For simplicity, we assume $\nabla g(x)$ has full column rank, which implies $g$ is surjective (this is always true when $m=1$). 
Otherwise we can directly constrain $c$ within the row space of $\nabla g(x)$ and the arguments remain valid. 

Our goal is to construct a mapping from $c$ to $x$ and analyze the gradient. 
However, there are infinitely many functions  from $c$ to $x$ that can locally inverse $g(x)$ since the dimension of concept ($m$) could be much smaller than input dimension $d$. 
Despite an infinite number of choices, we show the gradient of such function always follows a particular form: 
\begin{theorem}
\label{thm:main}
Consider a particular point $\hat{x}$ with $\hat{c}=g(\hat{x})$. 
Let $h: \R^m\rightarrow \R^d$ be a smooth and differentiable function mapping $c$ to $x$ and satisfy $g(h(c))=c$ locally within the $\epsilon$-ball around $\hat{c}$, then the gradient of $h$ will take the form of
\begin{equation}
    \nabla h(\hat{c})=\nabla g(\hat{x})^\dagger + G_\perp, 
    \label{eq:h_grad}
\end{equation}
where any row vector of $G_\perp$  belongs to $\text{null}(\nabla g(x_0)^T)$ (null space of $\nabla g(x_0)^T$). 
\end{theorem}
The proof of the theorem is deferred to the appendix. Intuitively, this implies the gradient of $h$ will take a particular form in the space of $\nabla g(x)^T$ while being arbitrary in its null space since any change in the null space cannot locally affect $c$. 
We can verify that
$g(x)$ is locally unchanged in the null space of $\nabla g(x)^T$, as 
\begin{equation*}
    g(x+ G_\perp) \approx g(x)+\nabla g(x)^T G_\perp = g(x). 
\end{equation*}
This is saying there are multiple choices in $\Delta x$ to achieve the same effect on $c$, since any additional perturbation in $\text{null}(\nabla g(x)^T)$ won't make any change to $c$ locally.
For example, when we want to change the concept of ``color'' in an image by $\Delta x$, we can have $\Delta x$ only including the minimal change (e.g., only changing the color), or have $\Delta x$ including change of color and any arbitrary change to another orthogonal factor (e.g., shape). 
For concept-based attribution, it is natural to consider the minimal space of $x$ that can cover $c$, which corresponds to setting $G_\perp$ as $0$ in \eqref{eq:h_grad}. 

If we pick such $h$ then $\nabla h(\hat{c})= \nabla g(\hat{x})^\dagger$, so we can represent $y$ as a function of $c$ locally by 
 $ y = p(c):=f(h(c))$ near $\hat{c}$. 
 By chain rule we then have
 \begin{equation*}
     \nabla p(\hat{c}) = \nabla h(\hat{c}) \nabla f(\hat{x}) = \nabla g(\hat{x})^\dagger \nabla f(\hat{x}) = R_\text{CG}(\hat{x}). 
 \end{equation*}
 In summary, although there are infinitely many functions from $c$ to $y$ since the input space has a larger dimension than the concept space, if we consider a subspace in $x$ such that change of $x$ will affect $c$ (locally) and ignore the orthogonal space that is irrelevant to $c$, then any function mapping from $c$ to $y$ through this space will have gradient equal to Concept Gradients defined in \eqref{eq:cg}. 
 
\subsection{Should we explain multiple concepts jointly or individually?}
\label{sec:multiple}
When attributing the prediction to multiple concepts, our flexible framework enables two options: 1) treating each concept independently using single-concept attribution formulation \eqref{eq:cg} 2) Combining all the concepts together into $c\in\R^m$ and run \text{CG}. 
Intuitively, option 2 takes the correlations between concepts into account while option 1 does not. 
When both concepts A and B are important for prediction but concept A is slightly more important than concept B, option 1 will identify both of them to be important while option 2 may attribute the prediction to concept B only. 
For example, when  $y=x_0+x_1, c_0=x_0, c_1=x_0+0.1x_1$, option 1 will identify both concepts to be important, while option 2 will produce negative score for $c_0$. 
We further compared these two options on real datasets.
The results are presented in Table~\ref{tab:cg_joint_independent}. 
We verified that empirically applying pseudo-inverse individually (independently) for each concept is better aligned with the attributes labeled by humans.

 \begin{table}[ht]
    \vspace{-10pt}
    \centering
    \caption{Comparison of different CG pseudo-inverse calculations}
    \label{tab:cg_joint_independent}
    \begin{tabular}{|c|c|c|c|c|c|c|}
        \hline
        \multirow{2}{*}{Method} &
        \multicolumn{3}{|c}{Local} & \multicolumn{3}{|c|}{Global} \\
        \cline{2-7}
         & R@30 & R@40 & R@50 & R@30 & R@40 & R@50 \\
        \hline
        Joint & 
        0.345 & 0.440 & 0.530 & 0.409 & 0.511 & 0.607 \\
        \cline{1-7} 
        Independent & 
        0.612 & 0.718 & 0.790 & 0.783 & 0.907 & 0.949 \\
        \hline
    \end{tabular}
    \vspace{-10pt}
\end{table}

\section{Example for demonstrating different CG calculation methods}
We use a simple toy example to explain the difference between option 1 and 2. Assume  $y=x_0+x_1$ and two concepts $c_0=x_0, c_1 = x_0+0.1x_1$. Since these relationships are all linear, concept gradients will be invariant to reference points. The results computed by option 1 and 2 are
\begin{align*}
    &\text{Option 1 (individually apply \eqref{eq:cg}): } R_{c_0, y} = 1, R_{c_1, y} \approx 1.01 \\
    &\text{Option 2 (apply CG jointly by \eqref{eq:cg}): }
      R_{c_0, y}=-9, R_{c_1, y}=10
\end{align*}
At the first glance, it might be counterintuitive as to why the contribution of $c_0$ is negative to $y$ with option 2. 
But in fact there exists a unique function $y=-9c_0+10c_1$ mapping $c$ to $y$ (when jointly considering two concepts), which leads to negative contribution of $c_0$. This shows that when considering two concepts together as a joint function, the gradient is trying to capture the effect of one concept with respect to others, which may be non-intuitive to human. 

\section{Example for selecting pseudo-inverse instead of other inverse matrices}
For the calculation of CG, multiple inverse matrices exist when $\nabla g(x)$ is not invertible.
We gave a high level example in Section~\ref{sec:derive} of why it makes sense to apply the pseudo-inverse.
In this example, we provide a concrete example with numbers to further elaborate.

Suppose the raw input x has two dimensions: $x_1$ indicates the feature “color” and $x_2$ indicates the feature “shape”.
There’s only one single concept $c$ corresponds to “color”, so $c = g(x) = x_1$.
The target model is $y=f(x) = x_1 - x_2$. 

Let's consider a particular input $\hat{x} = [1, 0]$ and $\hat{c}=1$. 
Clearly a small perturbation to ``color'' can lead to linear change to output with slope 1, so the concept attribution score should be 1. 
Now let’s follow the derivation of CG in this example to show why we choose the minimum-norm solution. Centered at $\hat{c}$, there can be infinite number of (linear) mappings from $c$ to $x$, but all of them will follow the form according to Theorem~\ref{thm:main}:
\begin{align*}
    h(c) &= h(\hat{c}) + \nabla h(\hat{c}) (c-\hat{c}) \\
    \nabla h(\hat{c}) &= \nabla g(\hat{x})^\dagger + G_\perp
\end{align*}

In our example, $h$ can be written explicitly as follows: 
\begin{align*}
    h(c) &= [1, 0] + ( [1, 0] + [0, \alpha]) (c-1) \\
         &= [c, \alpha (c-1) ] ,\quad \forall \alpha
\end{align*}

This corresponds to Theorem 1, where $\nabla g(\hat{x})^+ = [1,0] $ and $G_\perp$ is the set of $[0, \alpha]$ with any $\alpha$. 
These functions all set $x_1=c$ but will set $x_2=\alpha (c-1)$ with arbitrary $\alpha$. 
Conceptually, this is saying there are infinite number of ways to form the mapping from ``color'' ($c$) to input ($x$): we always change ``color'' according to $c$ but we can arbitrarily change the other orthogonal feature ``shape'' ($x_2$) according to color. 
Concept Gradients is trying to see how the change of $c$ affect output $y$ while \textbf{keeping all the other orthogonal factors fixed}. 
Therefore, we want to find the mapping that has minimal perturbation to $x$ when changing $c$, corresponding to keeping all other orthogonal factors fixed. 
This is the reason to choose the minimal norm solution in pseudo inverse. 

In this example, our method sets $\alpha=0$ and will lead to $h(c) = [1, 0]$. If $\alpha$ is set to some non-zero values in the pseudo-inverse, it will lead to the following score: 

\[
    \frac{\partial f(h(\hat{c}))}{\partial c} =  \langle [1, \alpha], [1, -1] \rangle = 1-\alpha
\]
which will give the wrong attribution score for any $\alpha\neq 0$. 

\section{Proof}
\begin{theorem}
Let $h: \R^m\rightarrow \R^d$ be a  smooth and differentiable function mapping $c$ to $x$ and satisfy $g(h(c))=c$ locally within an $\epsilon$-ball around $c_0$. Then the gradient of $h$ will take the form of
\begin{equation}
    \nabla h(c_0)=\nabla g(x_0)^\dagger + g_\perp, 
\end{equation}
where $g_\perp$ is in the null space of $\nabla g(x_0)^T$. 
\end{theorem}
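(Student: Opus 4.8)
The plan is to reduce the claim to an elementary linear-algebra fact about left inverses. The only analytic input is the locally-valid identity $g(h(c)) = c$ on an $\epsilon$-ball around $c_0$, together with differentiability of $g$ and $h$ and the standing assumption from Section~\ref{sec:derive} that $\nabla g(x_0)$ has full column rank. As in the CG derivation, we take $h$ normalized so that $h(c_0) = x_0$, i.e. $\nabla g$ is evaluated at the base point --- this is the same point at which the subsequent chain rule for $\nabla f$ is taken.

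First I would differentiate $g(h(c)) = c$ at $c = c_0$. Under the paper's convention in which $\nabla(\cdot)$ is the transpose of the Jacobian, the chain rule reads $\nabla(g\circ h)(c_0) = \nabla h(c_0)\,\nabla g\big(h(c_0)\big)$, with shapes $(m\times d)\cdot(d\times m)$; since $g\circ h$ is the identity near $c_0$ and $h(c_0) = x_0$, this yields $\nabla h(c_0)\,\nabla g(x_0) = I_m$. Next I would record the companion identity for the pseudo-inverse: because $\nabla g(x_0)\in\R^{d\times m}$ has full column rank $m$, we have $\nabla g(x_0)^\dagger = \big(\nabla g(x_0)^T \nabla g(x_0)\big)^{-1}\nabla g(x_0)^T$, and hence $\nabla g(x_0)^\dagger\,\nabla g(x_0) = I_m$ as well.

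Then I would subtract. Setting $g_\perp := \nabla h(c_0) - \nabla g(x_0)^\dagger \in \R^{m\times d}$, the two identities give $g_\perp\,\nabla g(x_0) = 0$, which says that each row of $g_\perp$ (a vector in $\R^d$) is orthogonal to every column of $\nabla g(x_0)$ --- equivalently, each row lies in $\mathrm{null}(\nabla g(x_0)^T)$. Rearranging is exactly \eqref{eq:h_grad}. I would also note the converse for completeness: every matrix of the form $\nabla g(x_0)^\dagger + g_\perp$ with rows of $g_\perp$ in $\mathrm{null}(\nabla g(x_0)^T)$ is itself a valid $\nabla h(c_0)$, so the form is not merely necessary but characterizes all such gradients, matching the remark that there are infinitely many local inverse maps $h$.

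I do not anticipate a real obstacle; the argument is a few lines. The points that require care are (i) the shape and order bookkeeping in the chain rule under the transpose-of-Jacobian convention, (ii) the implicit normalization $h(c_0) = x_0$ so that $\nabla g$ is legitimately evaluated at $x_0$ rather than at some other preimage of $c_0$, and (iii) the observation that it is precisely the full-column-rank hypothesis that makes $\nabla g(x_0)^\dagger$ a genuine \emph{left} inverse --- if it fails, $\nabla g(x_0)^\dagger$ inverts only on the column space and one must first restrict $c$ to $\mathrm{row}(\nabla g(x_0))$, as the text around the theorem already notes.
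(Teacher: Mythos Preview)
Your proposal is correct and follows essentially the same route as the paper: both arguments reduce to the identity $\nabla h(c_0)\,\nabla g(x_0)=I_m$ (the paper obtains the transposed form $\nabla g(x)^T\nabla h(c)^T=I$ via explicit Taylor expansions, you via the chain rule) and then read off the pseudo-inverse decomposition. If anything, your version is slightly more explicit in the final step, spelling out why $g_\perp\,\nabla g(x_0)=0$ forces the rows of $g_\perp$ into $\mathrm{null}(\nabla g(x_0)^T)$, where the paper simply asserts it.
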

\begin{proof}
Assume $h: \R^m\rightarrow \R^d$ is a function mapping $c$ to $x$. By Taylor expansion we have
\begin{align}
    g(x') = g(x) + \nabla g(x)^T (x'-x) + O(\|x'-x\|^2) \\
    h(c') = h(c) + \nabla h(c)^T (c'-c)+O(\|c'-c\|^2), 
\end{align}
where $x'=h(c')$ and $x = h(c)$. Therefore
\begin{align*}
    g(x') &= g(x) + \nabla g(x)^T (h(c')-h(c)) + O(\|h(c')-h(c)\|^2) \\
    &= g(x) + \nabla g(x)^T \nabla h(c)^T (c'-c) + O(\|\nabla g(x)\|^2\|c'-c \|^2)+ O(\|h(c')-h(c)\|^2). 
\end{align*}
The last two terms are both $O(\|c-c'\|^2)$ since $\nabla g(x)$ is a constant and $h$ is smooth (thus locally Lipschitz), so
\begin{equation}
    c'-c = \nabla g(x)^T \nabla h(c)^T (c'-c) + O(\|c-c\|^2). 
\end{equation}
Therefore, $\nabla g(x)^T \nabla h(c)^T=I$, which means $\nabla h(c) = \nabla g(x)^\dagger + g_\perp$. 
\end{proof}

\subsection{Definition of Concept Gradients}

Given input space $\mathcal{X}$ and two differentiable functions $f: \mathcal{X} \rightarrow \mathcal{Y}$ and $g: \mathcal{X} \rightarrow \mathcal{C}$, where $\mathcal{Y}$ is the target label space and $\mathcal{C}$ is the concept label space. We define the concept gradients of $x \in \mathcal{X}$ to attribute the prediction of the model to the concepts: 
\begin{equation}
    R_\text{CG}(x) := \nabla f(x) \cdot \nabla g(x)^\dagger, 
\end{equation}
where $\nabla g(x)^\dagger$ denotes the pseudo-inverse of $\nabla g(x)$.
Let $y = f(x)\in \mathcal{Y}$ and $c = g(x)\in \mathcal{C}$.
Essentially concept gradients approximate gradient-based saliency of $y$ to $c$, $h'(c)$, via chain rule.
For the case where $\mathcal{X}$, $\mathcal{Y}$, and $\mathcal{C}$ are all scalar fields, the concept gradients exactly recover $h'(c)$ if $g'(x) \neq 0$ for all $x \in \mathcal{X}$,

Given input space $\mathcal{X}$ and two differentiable functions $f: \mathcal{X} \rightarrow \mathcal{Y}$ and $g: \mathcal{X} \rightarrow \mathcal{C}$, where $\mathcal{Y}$ is the target label space and $\mathcal{C}$ is the concept label space.
Suppose there exist an unknown differentiable function $h: \mathcal{C} \rightarrow \mathcal{Y}$ s.t. $f = h \circ g$. 
Let $f'$, $g'$, and $h'$ denote the first-order derivatives.
We define the concept gradients of $x \in \mathcal{X}$ as 
\[
    R_\text{CG}(x) := \nabla f(x) \cdot \nabla g(x)^\dagger
\]
where $\nabla g(x)^\dagger$ denotes the pseudo-inverse of $\nabla g(x)$.
Let $y = f(x)\in \mathcal{Y}$ and $c = g(x)\in \mathcal{C}$.
Essentially concept gradients approximate gradient-based saliency of $y$ to $c$, $h'(c)$, via chain rule.
For the case where $\mathcal{X}$, $\mathcal{Y}$, and $\mathcal{C}$ are all scalar fields, the concept gradients exactly recover $h'(c)$ if $g'(x) \neq 0$ for all $x \in \mathcal{X}$,
\[
    h'(c) = \frac{\text{d}y}{\text{d}c} 
    = \frac{\text{d}y}{\text{d}x} \cdot \frac{\text{d}x}{\text{d}c} 
    = \frac{\text{d}y}{\text{d}x} \cdot (\frac{\text{d}c}{\text{d}x})^{-1}
    = f'(x) \cdot \frac{1}{g'(x)}
    = R_\text{CG}(x)
\]
We now generalize $\mathcal{X}$ to a n-dimensional vector space.
Since $f$, $g$, and $h$ are differentiable, we can perform Taylor expansion around $x$ and $c$

\begin{align}
    f(x') &= f(x) + \nabla f(x) (x' - x) + O((x' - x)^2) \label{eq:taylor_f} \\
    g(x') &= g(x) + \nabla g(x) (x' - x) + O((x' - x)^2) \label{eq:taylor_g} \\
    h(c') &= h(c) + h'(c) (c' - c) + O((c' - c)^2) \label{eq:taylor_h}
\end{align}

Let us denote $\Delta x = x' - x$.
We can plug Eq~\ref{eq:taylor_g} into Eq~\ref{eq:taylor_h}
\begin{align*}
    h(g(x')) - h(c)
    &= h'(c)(g(x') - c) + O((g(x') - c)^2) \\
    &= h'(c) \cdot \bigg( g(x) + \nabla g(x) \Delta x + O(\Delta x^2) - c \bigg) + O((g(x') - c)^2) \\
    &= h'(c) \cdot \bigg( \nabla g(x) \Delta x + O(\Delta x^2) \bigg) + O((g(x') - c)^2) \\
    &\approx h'(c) \cdot \bigg( \nabla g(x) \Delta x\bigg)\\
\end{align*}

\begin{align*}
    f(x') - f(x)
    &= \nabla f(x) \Delta x + O(\Delta x^2) \\
    &\approx \nabla f(x) \Delta x \\
\end{align*}

\begin{align*}
    h(g(x')) - h(c) &= f(x') - f(x) \\
    h'(c) \cdot \bigg( \nabla g(x) \Delta x\bigg) &\approx \nabla f(x) \Delta x \\
\end{align*}
Let us denote the set of right inverses for $\nabla g(x)$ as $G_r^{-1}(x)$.
\[
    G_r^{-1}(x) = \{\nabla g(x)^\dagger + g_\perp^T, \forall g_\perp: \langle \nabla g(x), g_\perp\rangle = 0\}
\]
By definition for all $g_r^{-1} \in G_r^{-1}(x)$,
\begin{align*}
    h'(c) \cdot \Delta x &\approx \nabla f(x) \cdot g_r^{-1} \cdot \Delta x
\end{align*}
If $\nabla g(x)$ is invertible, $G_r^{-1}(x) = \{\nabla g(x)^\dagger\}$ and the equality exactly holds
\[
    h'(c) 
    = \nabla f(x) \cdot {\nabla g(x)}^\dagger
    = \nabla f(x) \cdot {\nabla g(x)}^{-1}
\]
If $\nabla g(x)$ is not invertible, $g_r^{-1}$ is not unique and infinitely many right inverses exist for $\nabla g(x)$.
In this case, how does the selection of right inverse relate to interpretation?
For interpretation, the goal is to attribute a small change $\Delta c$ via $g_r^{-1}$ to a small change $\Delta x$.
Non-invertibility of $\nabla g(x)$ implies there are many ways we could perform the attribution.
Consider the alignment between some $g_r^{-1} = \nabla g(x)^\dagger + g_\perp^T$ and $\nabla g(x)$,
\[
    \frac{\nabla g(x)^ \cdot (\nabla g(x)^\dagger + g_\perp^T)}{\|\nabla g(x)\| \cdot \|\nabla g(x)^\dagger + g_\perp^T\|}
    = \frac{\nabla g(x)^ \cdot \nabla g(x)^\dagger}{\|\nabla g(x)\| \cdot \|\nabla g(x)^\dagger + g_\perp^T\|}
    \leq \frac{\nabla g(x)^ \cdot \nabla g(x)^\dagger}{\|\nabla g(x)\| \cdot \|\nabla g(x)^\dagger\|}
\]
We argue that the best interpretation is when the attribution is faithful to the relation $g$ between $\mathcal{X}$ and $\mathcal{C}$, i.e., when $g_r^{-1}$ is best aligned with $\nabla g(x)$.
Observe that $\nabla g(x)^\dagger$ is in the same direction as $\nabla g(x)$.
On the other hand, any right inverse with $g_\perp \neq 0$ is attributing \textit{some} proportion of $\text{d}c$ to $\text{d}x$ in a direction that is orthogonal to $\nabla g(x)$.
Thus, the best choice for $g_r^{-1}$ is $\nabla g(x)^\dagger$.

\subsection{Connection with CAV}
CAV is a special case of concept gradients with $g$ restricted to linear functions.
Let $\bm{v}_c$ denote the concept activation vector associated with concept $c$.
CAV defines the conceptual sensitivity $S$ as the inner product of the input gradient and concept activation vector,
\[
    R_\text{CAV}(x) := \nabla f(x) \cdot \frac{\bm{v}_c}{\|\bm{v}_c\|}
\]
If $g$ is restricted to linear functions,
\[
    g(x) = \bm{v}_c^T \cdot x + b_c
\]
for some constant bias $b_c$.
Concept gradients is equivalent to CAV conceptual sensitivity normalized by the norm of the concept activation vector,
\[
    R_\text{CG}(x) = \nabla f(x) \cdot \nabla g(x)^\dagger 
    = \nabla f(x) \cdot (\bm{v}_c^T)^\dagger
    = \nabla f(x) \cdot \frac{\bm{v}_c}{\|\bm{v}_c\|^2}
    = \frac{R_\text{CAV}(x)}{\|\bm{v}_c\|}
\]
Thus, if the concept can be accurately modeled by a linear function, CAV is capable of retrieving the concept gradients.
However, in general the linear separability assumption does not hold.
In contrast, concept gradients consider general function classes for $g$, which better captures the relation between $\mathcal{X}$ and $\mathcal{C}$.
Given accurately modeling the concept with $g$ is a necessary condition for correct interpretation, concept gradients is superior to CAV.

\section{Alternative perspective for layer selection}
We can also view layer selection in a typical bias-variance tradeoff perspective.
If we selected a later layer to evaluate CG, we are biased towards using a representation of $x$ that is optimized for predicting the target $y$, not $c$.
However, since the information consists in the representation is less, we also enjoy the benefit of less variance.
On the other hand, if we selected an earlier layer to evaluate CG, then we suffer less from the bias (towards $y$) but is penalized with higher variance due to abundance of information.
The optimal layer is where the representation of $x$ is capable of predicting the concept (minimized bias) while no redundant information is available (minimized variance).

We verified the bias-variance tradeoff hypothesis with experiments.
More bias (with respect to target labels) in later layers is confirmed with the observation that finetuning more layers yields higher concept prediction accuracy (see Fig~\ref{fig:cub_cg}).
Less variance in later layers is confirmed with the experiment below.
We repeated the CUB experiments on the Inception v3 model with 5 different random seeds and evaluated the variance of the gradient $\nabla g(x)$ over repeated trials, averaged over all data points.
Specifically, the gradients for models finetuned starting from different layers are evaluated on the same layer (\texttt{Mixed\_6d}) for fair comparison.
The results are shown in Fig~\ref{fig:cub_variance} and confirmed the variance hypothesis.

\begin{figure}[ht]
    \centering
    \includegraphics[width=\linewidth]{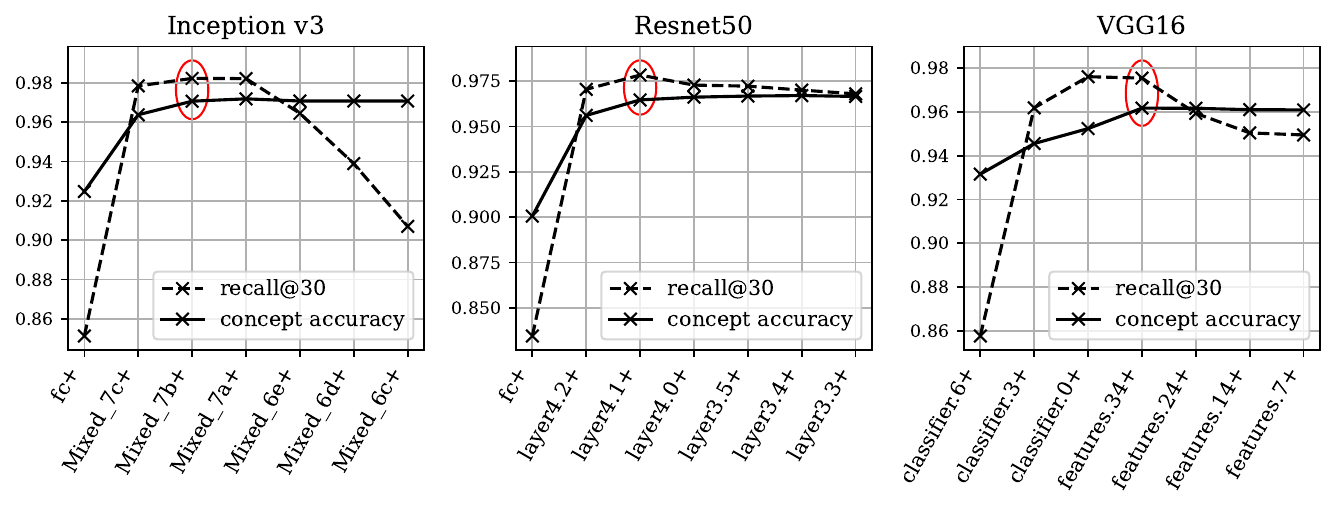}
    \caption{
    Concept prediction accuracy and concept attribution recall when finetuning starting from different layers of the model.
    Finetuning more layers leads to higher concept prediction accuracy.
    }
    \label{fig:cub_cg}
\end{figure}

\begin{figure}[ht]
    \centering
    \includegraphics[width=.5\linewidth]{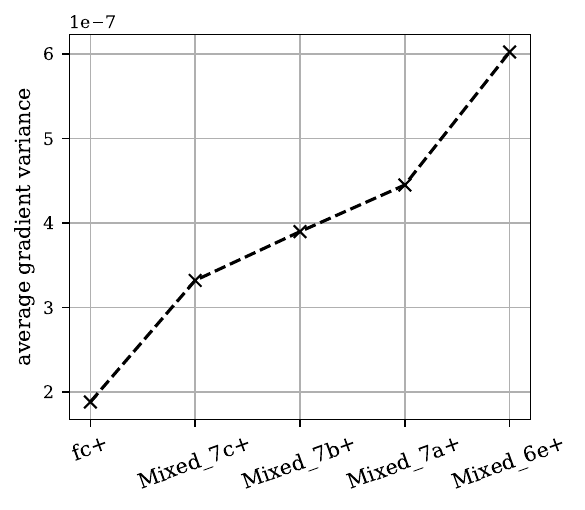}
    \caption{Variance of gradients finetuning starting from different layers.
    The variance is higher when finetuning starting from earlier layers.
    }
    \label{fig:cub_variance}
\end{figure}

\section{Ablation study on concept model configuration}
Experiments in Section~\ref{sec:exp_cub} and \ref{sec:exp_qualitative} shares the same model architecture for the target and concept model.
The two design choices here are 1) using the same model architecture and 2) warm-starting the training of concept models with target model weights.
The choices are rather straightforward since both models need to share the same input feature representation for the gradients with respect to the input layer to be meaningful.
Nevertheless, we conducted an ablation study on the CUB experiment to verify how using different model architectures different weight initialization for the concept model affects interpretation.
In this case, CG needs to be evaluated in the input layer, the only layer where the feature representation is shared between target and concept models.

The model architecture alabltion study results are presented in Table~\ref{tab:concept_arch_ablation}.
The CG scores are all evaluated in the input layer.
Evidently, using the same model architecture for both the target and concept models is crucial for good interpretation quality with CG.
The degradation of interpretation when using different architectures may be caused by mismatched usage of input feature representation between models.
The concept model weight initialization results are presented in Table~\ref{tab:concept_weight_ablation}.
Using target model weights as initialization outperforms using ImageNet pretrained weights significantly.
The more similar the pretrained task is to the downstream task is, the better the finetuned performance.
In this case, the concept prediction accuracy suggests the target model task of bird classification is a better pretraining task for predicting bird concepts, allowing the concepts to be better captured by the concept function (higher accuracy).
This naturally leads to better interpretation results.

\begin{table}[ht]
    \centering
    \caption{Ablation study on concept model architecture}
    \label{tab:concept_arch_ablation}
    \begin{tabular}{|c|c|c|c|c|}
    \hline
    Target model & Concept model & R@30   & R@40   & R@50.  \\
    \hline
    Inception v3 & Inception v3  & 0.872 &	0.931 & 0.957 \\
    Inception v3 & Resnet50      & 0.535 & 0.620 & 0.684 \\
    Inception v3 & VGG16         & 0.486 & 0.561 & 0.619 \\
    \hline
    Resnet50     & Resnet50      & 0.933 & 0.976 & 0.988 \\
    Resnet50     & Inception v3  & 0.623 & 0.705 & 0.757 \\
    Resnet50     & VGG16         & 0.554 & 0.631 & 0.692 \\
    \hline
    VGG16        & VGG16         & 0.955 & 0.986 & 0.994 \\
    VGG16        & Inception v3  & 0.525 & 0.606 & 0.664 \\
    VGG16        & Resnet50      & 0.609 & 0.698 & 0.758 \\
    \hline
    \end{tabular}
\end{table}

\begin{table}[ht]
    \centering
    \caption{Ablation study on concept model weight initialization}
    \label{tab:concept_weight_ablation}
    \begin{tabular}{|c|c|c|c|c|}
    \hline
    Weight initialization   & Concept accuracy & R@30   & R@40   & R@50.  \\
    \hline
    ImageNet pretrained     & 0.916           & 0.577 & 0.670 & 0.739 \\
    target model pretrained & 0.972           & 0.872 & 0.931 & 0.957 \\
    \hline
    \end{tabular}
\end{table}

\section{Ablation study on the expressiveness of concept model class}
In most of this study, the concept model is implemented by finetuning the target model to predict the concepts instead.
When using an input representation in deeper layers, the portion of finetuned model would be relatively less, yielding a simpler concept model.
Our hypothesis is input representation in deeper layers generally contains higher level semantics and thus only requires simpler models to capture the concept.

To test this hypothesis, we explored whether using a more expressive function class for the concept model benefits the attribution result.
We focused on the ResNet architecture and constructed concept models of different complexities.
We experimented on input representations in deeper layers (\texttt{layer4.1} and \texttt{layer4.2}) where the original concept model is simpler and may not be expressiveness enough to capture the concepts.
To increase the complexity of a ResNet model, we duplicated ResNet residual blocks.
The more times a block is duplicated, the deeper and more expressive a model is.

Table~\ref{tab:concept_model_expressiveness} shows the results.
The number of duplication represents how many time the selected residual blocks is duplicated for the concept model.
As increasing the complexity of the model does not lead to better concept prediction accuracy, the attribution performance (recall) is not improved.
This confirmed our hypothesis for using a simpler (more complex) models for deeper (shallower) layer representations.

\begin{table}
    \vspace{-10pt}
    \centering
    \caption{Comparing concept models of different expressiveness.
    Increasing model expressiveness does not lead to increase in accuracy and thus does not translate to better attribution performance.}
    \label{tab:concept_model_expressiveness}
    \centering
    \begin{tabular}{|c|c|c|c|c|c|c|c|c|}
        \hline
        Residual & Number of & Concept & \multicolumn{3}{c|}{Local (per-sample)} & \multicolumn{3}{c|}{Global (per-class)} \\
        \cline{4-9}
        Block & Duplication &Accuracy& R@30   &  R@40  &  R@50 & R@30   &  R@40  &  R@50\\
        \hline 
        \multirow{4}{*}{layer4.1}   
        & 1 & 0.794	& 0.614	& 0.718	& 0.786	& 0.808	& 0.909	& 0.948 \\
        \cline{2-9}                 
        & 2 & 0.795	& 0.605	& 0.705	& 0.775	& 0.810	& 0.901	& 0.944 \\
        \cline{2-9}                 
        & 3 & 0.791	& 0.597	& 0.697	& 0.767	& 0.811	& 0.904	& 0.944 \\
        \cline{2-9}                 
        & 4 & 0.791	& 0.594	& 0.693	& 0.762	& 0.820	& 0.906	& 0.943 \\
        \hline
        \multirow{4}{*}{layer4.2}   
        & 1 & 0.793	& 0.627	& 0.726	& 0.794	& 0.827	& 0.915	& 0.951 \\
        \cline{2-9}                 
        & 2 & 0.793	& 0.622	& 0.720	& 0.788	& 0.838	& 0.919	& 0.955 \\
        \cline{2-9}                 
        & 3 & 0.794	& 0.613	& 0.710	& 0.777	& 0.838	& 0.918	& 0.952 \\
        \cline{2-9}                 
        & 4 & 0.793	& 0.613	& 0.712	& 0.781	& 0.857	& 0.931	& 0.964 \\
        \hline
    \end{tabular}
    \vspace{-10pt}
\end{table}

\section{To smooth or not to smooth?}

It has been shown that relying on gradient saliency maps may yield misleading interpretation due to the high non-linearity in deep neural networks.
There are many works on improving gradient-based interpretations with respect to the input features~\citep{smilkov2017smoothgrad, sundararajan2017axiomatic, shrikumar2017learning}.
Since CG also relies on input gradients for calculation, it is important to examine whether CG suffers from the same issues.
In this ablation study, we compared SmoothCG, a variant of CG, that remedies inaccurate gradients.

Inspired by SmoothGrad~\citep{smilkov2017smoothgrad}, we proposed SmoothCG, where the attribution is averaged over the neighborhood $D$ near a data sample.
\[
    \text{SmoothCG}(x) := \mathop{\mathbb{E}}_{\epsilon \sim D} \text{CG}(x + \epsilon)
\]
Smoothing over the neighborhood mitigates inaccurate interpretation caused by abrupt changes in input gradients in highly non-linear functions.
It is also shown that randomized smoothing improves the robustness~\citep{cohen2019certified}.
In practice, we take finite samples near an input instance with a predetermined perturbation distribution (e.g. Gaussian) and average their CG scores.

Figure~\ref{fig:cub_recalls_smoothcg} shows the concept recall comparison between CG and SmoothCG.
SmoothCG is calculated by smoothing over 8 samples per instance, adding perturbations sampled from $\mathcal{N}(0, 0.01)$.
In general the performances are similar.
CG performs marginally better in deeper layers while SmoothCG in shallower ones, likely because gradients propagating to deeper layers are less accurate and thus benefit from the smoothing.
However, the difference in performance is not consistent nor significant enough to justify the addition computation cost of SmoothCG.

\begin{figure}
    \vspace{-10pt}
    \centering
    \includegraphics[width=\linewidth]{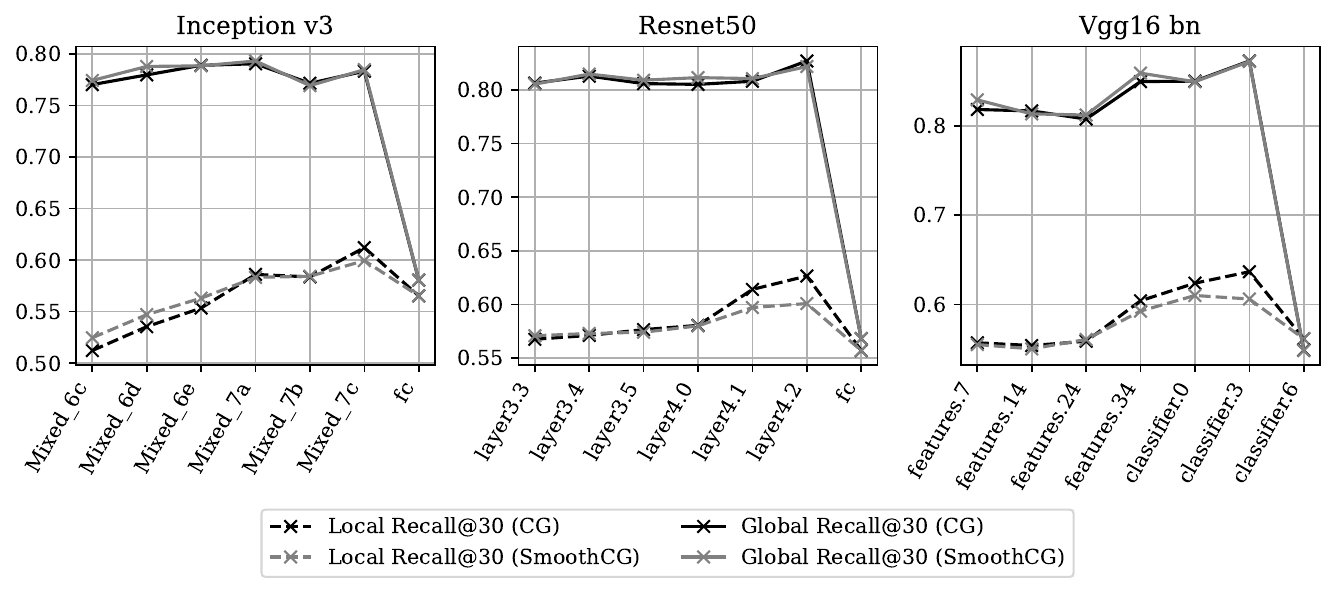}
    \vspace{-10pt}
    \caption{Comparing CUB concept recalls between CG and SmoothCG (stdev=0.01, n=8) for different input representations in various layers and architectures (left to right, deep to shallow layers).
    SmoothCG performs similarly to CG with marginal improvement in shallow layers.}
    \label{fig:cub_recalls_smoothcg}
    \vspace{-10pt}
\end{figure}

\section{Experiment details}

\subsection{Animal with Attributes 2 (AwA2)}
\textbf{Data preprocessing}
Since the original task is proposed for zero-shot classification, the class labels in the default training and validation set is disjoint.
To construct a typical classification task, we combined all data together then performed a 80:20 split for the new training and validation set.
During training, the input images are augmented by random color jittering, horizontal flipping, and resizing, then cropped to the default input resolution of the model architecture (299 for Inception v3, 224 for others).
During evaluation, the input images are resized and center cropped to the input resolution.

The attribute labels provided in the dataset contains both concrete and abstract concepts.
Some abstract concepts cannot be identified merely by looking at the input image (e.g. new world vs old world).
We filtered out 25 attributes that are not identifiable via the input image and used the remaining 60 attributes for interpretation.

\textbf{Training}
We trained the target model $f$ with Inception v3 architecture with ImageNet pretrained weights (excluding the final fully connected layer).
We optimized with Adam (learning rate of 0.001, beta1 of 0.9, and beta2 of 0.999) with weight decay 0.0004 and schedule the learning rate decay by 0.1 every 15 epochs until the learning rate reaches 0.00001.
We trained the model for a maximum of 200 epochs and early stopped if the validation accuracy had not improved for 50 consecutive epochs.
The validation accuracy of the trained target model is 0.947.

We trained the concept model $g$ by finetuning different parts of $f$ (freezing different layer model weights).
We reweighted the loss for positive and negative class to balance class proportions.
We optimized with Adam (learning rate of 0.01, beta1 of 0.9, and beta2 of 0.999) with weight decay 0.00004 and schedule the learning rate decay by 0.1 every 25 epochs until the learning rate reaches 0.00001.
We trained the model for a maximum of 200 epochs and early stopped if the validation accuracy had not improved for 50 consecutive epochs.

\textbf{Evaluation}
Visualization is conducted on the validation set.
Finetuning from \texttt{Mixed\_7a} is the latest layer that still predicted the concepts well.
According to the layer selection guideline, we selected \texttt{Mixed\_7a} to evaluate CG.
We computed CG with the individual inverse method \eqref{eq:cg}.
Fig~\ref{fig:awa2_visualize_instance} shows random samples from the validation set and their top 10 rated concepts.
These samples are intentionally randomly sampled as opposed to intentionally curated to provide an intuition of the true effectiveness of CG.
In general, the retrieved highest ranked concepts are relevant with the input image.
In terms of sanity check, there are no contradictions in the concepts (e.g. furry is never assigned to an whales).

\begin{figure}[ht]
    \centering
    \includegraphics[width=.9\linewidth]{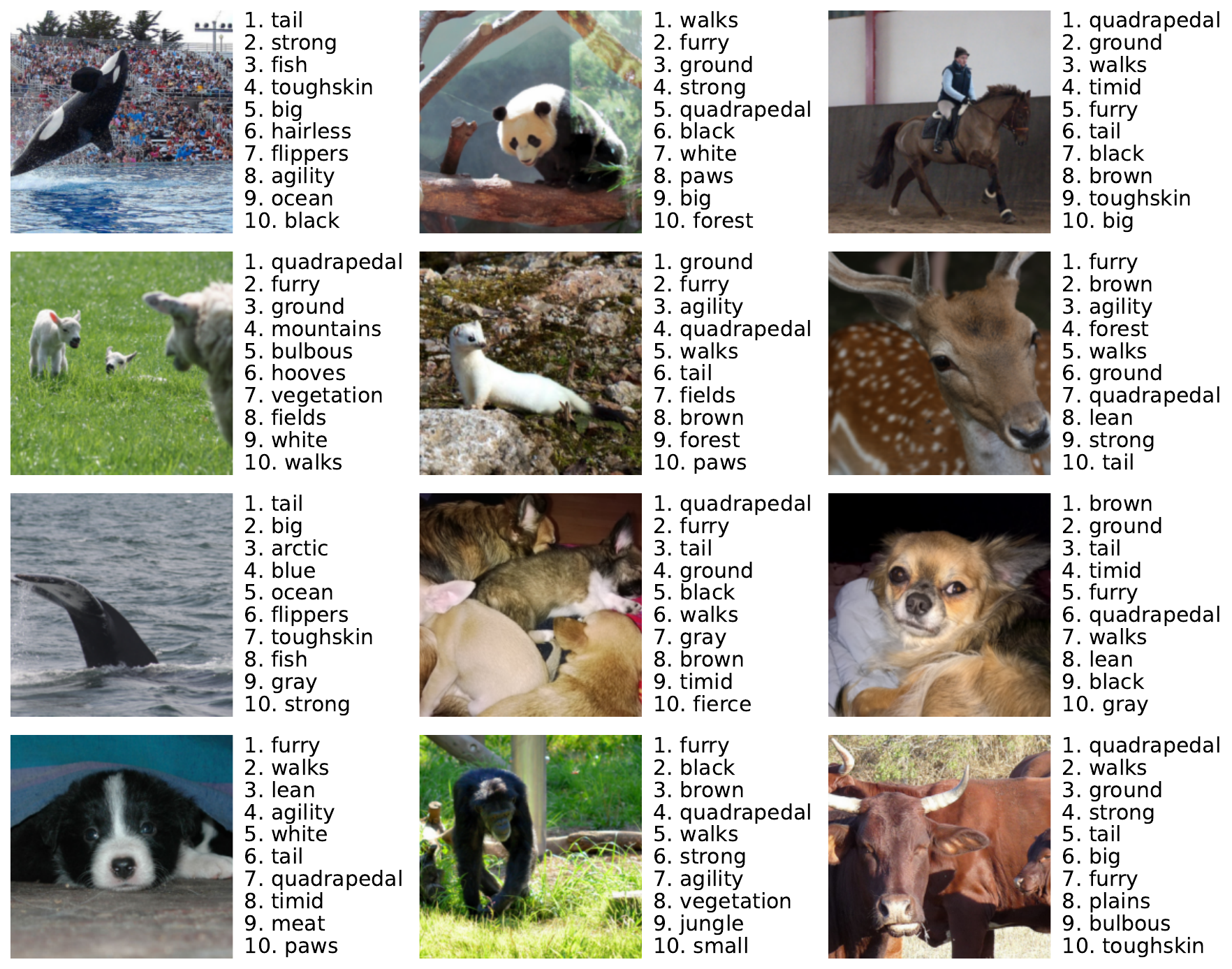}
    \caption{Visualization of randomly sampled instances (AwA2 validation set) and the most important concepts associated with their respective CG attribution (top 10 concept, 60 in total).}
    \label{fig:awa2_visualize_instance}
\end{figure}

\subsection{Caltech-UCSD Birds-200-2011 (CUB)}
\label{sec:appendix_cub}
\textbf{Data preprocessing}
Similar to the AWA2 experiments, the input images are augmented by random color jittering, horizontal flipping, and resizing, then cropped to the default input resolution of the model architecture (299 for Inception v3, 224 for others) during training.
During evaluation, the input images are resized and center cropped to the input resolution.
We followed \citep{koh2020concept} procedure of removing attributes with insufficient data samples.
A total of 112 attributes remains for conducting interpretation.
The class attribute labels are assigned as instance labels, i.e., instances from the same class share the same attribute labels.

\textbf{Training}
We trained the target model $f$ with three different CNN architectures: Inception v3, Resnet50, and VGG16 (with batch normalization), each with ImageNet pretrained weights (excluding the final fully connected layer).
We searched for hyperparameters over a range of learning rates ($0.01, 0.001$), learning rate schedules (decaying by 0.1 for every $15, 20, 25$ epochs until reaching 0.0001), and weight decay ($0.0004, 0.00004$). 
We optimized with the SGD optimizer.
We trained the model for a maximum of 200 epochs and early stopped if the validation accuracy had not improved for 50 consecutive epochs.
The validation accuracy of the trained target model is 0.797, 0.764, and 0.782 for the three models, respectively.

We trained the concept model $g$ by finetuning different parts of $f$ (freezing different layer model weights).
The different layers we started to finetuned from for each model architecture is listed in Table~\ref{tab:arch_finetune_layers}.
The plus sign in the table represents all layers after the specified layer are all finetuned while all layers prior to the specified layer have their weights kept frozen.
We reweighted the loss for positive and negative class to balance class proportions.
We searched for the hyperparameters and trained the model same as the target model $f$.

\begin{table}
  \centering
  \caption{Finetune layers for architectures}
  \label{tab:arch_finetune_layers}
  \centering
  \begin{tabular}{|c|p{80mm}|}
    \hline
    Architecture     &  Finetuned layers \\
    \hline
    Inception v3 &  \texttt{fc+}, \texttt{Mixed\_7c+}, \texttt{Mixed\_7b+}, \texttt{Mixed\_7a+}, \texttt{Mixed\_6e+}, \texttt{Mixed\_6d+}, \texttt{Mixed\_6c+} \\
    \hline
    Resnet50 &  \texttt{fc+}, \texttt{layer4.2+}, \texttt{layer4.1+}, \texttt{layer4.0+}, \texttt{layer3.5+}, \texttt{layer3.4+}, \texttt{layer3.3+} \\
    \hline
    VGG16 &  \texttt{classifier.6+}, \texttt{classifier.3+}, \texttt{classifier.0+}, \texttt{features.34+}, \texttt{features.24+}, \texttt{features.14+}, \texttt{features.7+} \\
    \hline
  \end{tabular}
\end{table}

\textbf{Evaluation}
Evaluation is conducted on the testing set.
CG and CAV are evaluated in the layer prior to finetuning.
We evaluated the recalls for $k=30,40,50$ and generally the recall trend is consistent for all $k$s.
These thresholds are chosen since the number of concepts with positive labels for each instance is in the range of 30 to 40.

\subsection{Myocardial Infarction Complications}
\label{sec:appendix_myocardial}
\begin{table}[ht]
  \centering
  \caption{Mortality risk attribution with respect to myocardial infarction complications and comparison with existing medical literature  
  }
  \label{tab:myocardial_full}
  \centering
  \begin{tabular}{|p{0.16\linewidth}|c|c|p{0.625\linewidth}|}
    \hline
    Complication                        & CG   & TCAV  & 
    Excerpted mortality risk description from medical literature\\
    \hline
    Relapse of MI                       & 3.47 & 2.55 &
    Recurrent infarction causes the most deaths following myocardial infarction with left ventricular dysfunction.~\citep{pmid15989909}\\
    \hline
    Chronic \newline heart failure      & 3.27 &-1.26 &
    The mortality rate in a group of patients with class III and IV heart failure is about 40\% per year, and half of the deaths are sudden.~\citep{pmid3552300}\\
    \hline
    Atrial \newline fibrillation        & 2.29 & 1.11 &
    AF increases the risk of death by 1.5-fold in men and 1.9-fold in women.~\citep{pmid9737513}\\
    \hline
    Myocardial rupture                  & 1.62 & 6.52 &
    Myocardial rupture is a relatively rare and usually fatal complication of myocardial infarction (MI).~\citep{pmid20598985}\\
    \hline
    Pulmonary edema                     & 1.51 & 2.21 &
    Pulmonary oedema in patients with acute MI hospitalized in coronary care units was reported to be associated with a high mortality of 38–57\%.~\citep{pmid10856726}\\
    \hline
    Ventricular \newline fibrillation   & 0.91 & 1.90 &
    Patients developing VF in the setting of acute MI are at higher risk of in-hospital mortality.~\citep{pmid24258072}\\
    \hline
    Third-degree AV block               & 0.69 & 1.71 &
    In patients with CHB complicating STEMI, there was no change in risk-adjusted in-hospital mortality during the study period.~\citep{pmid29759406}\\
    \hline
    Ventricular tachycardia             & 0.51 &-0.19 &
    Ventricular tachycardia is most commonly associated with ischemic heart disease or other forms of structural heart disease that are associated with a risk of sudden death.~\citep{pmid19252119}\\
    \hline
    Dressler \newline syndrome          & 0.32 &-2.85 &
    The prognosis for patients with DS is typically considered to be quite good.~\citep{leib2017dressler}\\
    \hline
    Supraventricular tachycardia        & 0.24 &-0.36 &
    Although SVT is usually not life-threatening, many patients suffer recurrent symptoms that have a major impact on their quality of life.~\citep{pmid19296791}\\
    \hline
    Post-infarction angina              &-1.40 &-2.85 &
    After adjustment, angina was only weakly associated with cardiovascular death, myocardial infarction, or stroke, but significantly associated with heart failure, cardiovascular hospitalization, and coronary revascularization.~\citep{pmid27680665}\\
    \hline
  \end{tabular}
\end{table}

\end{document}